    \newcolumntype{Y}{>{\centering\arraybackslash}X}
\DeclareDocumentCommand{\set}{m g o}{
    \ensuremath{
        \IfNoValueTF{#3}{\left}{#3}\{#1
            \IfNoValueTF{#2}{}{
                \ \IfNoValueTF{#3}{\left}{#3}\vert\ \vphantom{#1}#2\IfNoValueTF{#3}{\right.}{}
            } \IfNoValueTF{#3}{\right}{#3}\}
    }\xspace
}
    \def\\{}
\definecolor{lightgreen}{rgb}{0.75,0.92,0.61}
\newcommand{\labelname}[1]{
  \def\@currentlabelname{#1}}
\providecommand{\ignore}[1]{}
\newcommand{\oneoneEA}{(1+1) EA\xspace}
\newcommand{\muoea}{($\mu$+1) EA\xspace}
\newcommand{\cGA}{cGA\xspace}
\newcommand{\om}{\textsc{OM}_{D}\xspace}
\newcommand{\OneMax}{\textsc{OneMax}\xspace}
\newcommand{\LeadingOnes}{\textsc{LeadingOnes}\xspace}
\DeclareMathOperator{\Geo}{Geo}
\DeclareMathOperator{\erfc}{erfc}
\newcommand{\normal}[1]{\mathcal{N}(#1)}
\newcommand{\eqnComment}[2]{\underset{{\scriptstyle \text{#1}}}{#2}}
\newcommand{\realnum}{\mathbb{R}}
\newcommand{\natnum}{\mathbb{N}}
\newtheorem{thm}{Theorem}
\newtheorem{lem}[thm]{Lemma}
\newenvironment{myAlgorithm}%
	{
	\begin{center}
	\begin{algorithm2e}
	}%
	{
	\end{algorithm2e}
	\end{center}
	}
\begin{document}
\title{Theoretical Study of Optimizing Rugged Landscapes with the cGA}
%
%
\author{Tobias Friedrich\inst{1}
\and
Timo Kötzing\inst{1}
\and
Frank Neumann\inst{2}
\and
Aishwarya Radhakrishnan\inst{1}}
\authorrunning{Friedrich et al.}
%

\institute{Hasso Plattner Institute, University of Potsdam, Potsdam, Germany
\email{friedrich@hpi.de, timo.koetzing@hpi.de, aishwarya.radhakrishnan@hpi.de}\\
\and
School of Computer Science, The University of Adelaide, Adelaide, Australia\\
\email{frank.neumann@adelaide.edu.au}}
\maketitle              
\begin{abstract}
Estimation of distribution algorithms (EDAs) provide a dis\-tri\-bution-based approach for optimization which adapts its probability distribution during the run of the algorithm. We contribute to the theoretical understanding of EDAs and point out that their distribution approach makes them more suitable to deal with rugged fitness landscapes than classical local search algorithms. 

Concretely, we make the \OneMax function rugged by adding noise to each fitness value. The cGA can nevertheless find solutions with $n(1-\varepsilon)$ many $1$s, even for high variance of noise. In contrast to this, RLS and the (1+1) EA, with high probability, only find solutions with $n(1/2+o(1))$ many $1$s, even for noise with small variance.

\keywords{Estimation-of-distribution algorithms \and Compact genetic algorithm \and Random local search \and Evolutionary algorithms \and Run time analysis \and Theory}
\end{abstract}
\section{Introduction}
\label{sec:intro}

Local search~\cite{10.5555/549160}, evolutionary algorithms~\cite{DBLP:series/ncs/EibenS15} and other types of search heuristics have found applications in solving classical combinatorial optimization problems as well as challenging real-world optimization problems arising in areas such as mine planning and scheduling~\cite{DBLP:conf/gecco/MyburghD10,DBLP:conf/cec/OsadaWBM13} and renewable energy~\cite{DBLP:conf/gecco/TranWDA0N13,DBLP:journals/isci/NeshatAW20}.

Local search techniques perform well if the algorithm can achieve improvement through local steps whereas other more complex approaches such as evolutionary algorithms evolving a set of search points deal with potential local optima by diversifying their search and allowing to change the current solutions through operators such as mutation and crossover.
Other types of search heuristics, such as estimation of distribution algorithms~\cite{PelikanHandbook15} and ant colony optimization~\cite{DBLP:books/daglib/0013523}, sample their solutions in each iteration from a probability distribution that is adapted based on the experience made during the run of the algorithm. One of the key questions that arises is when to favour distribution-based algorithms over search point-based methods. We will investigate this in the context of rugged landscapes that are obtained by stochastic perturbation.

Real-world optimization problems are often rugged with many local optima and quantifying and handling rugged landscapes is an important topic when using search heuristics~\cite{DBLP:conf/cec/MalanE09,poursoltan2015ruggedness,10.1162/evco_a_00193}. Small details about the chosen search point can lead to a rugged fitness landscapes even if the underlying problem has a clear fitness structure which, by itself, would allow local search techniques to find high quality solution very quickly.

In this paper we model a rugged landscape with underlying fitness structure via \OneMax,\footnote{\OneMax is the well-studied pseudo-Boolean test function mapping $x \in \{0,1\}^n$ to $\sum_{i=1}^n x_i$.} where each search point is perturbed by adding an independent sample from some given distribution $D$. We denote the resulting (random) fitness function $\om$.

Note that this setting of $D$-rugged \OneMax is different from noisy \OneMax (with so-called additive posterior noise) \cite{GiessenK16,timo,paper2,poster,compact} in that evaluating a search point multiple times does not lead to different fitness values (which then could be averaged, implicitly or explicitly) to get a clearer view of the underlying fitness signal. Note that another setting without reevaluation (but on a combinatorial path problem and for an ant colony algorithm, with underlying non-noisy ground truth) was analyzed in \cite{paper3}.

In this paper we consider as distribution the normal distribution as well as the geometric distribution. Since all search points get a sample from the same distribution added, the mean value of the distribution is of no importance (no algorithm we consider makes use of the absolute fitness value, only of relative values). Mostly important is the steepness of the tail, and in this respect the two distributions are very similar.

An important related work, \cite{FriedrichKKS16} discusses what impact the \emph{shape} of the chosen noise model has on optimization of noisy \OneMax. They find that steep tails behave very differently from uniform tails, so for this first study we focus on two distributions with steep tails, which we find to behave similarly.

As a first algorithm, which was found to perform very well under noise \cite{poster}, we consider the \emph{compact genetic algorithm} (cGA), an estimation of distribution algorithm which has been subject to a wide range of studies in the theoretical analysis of estimation of distribution algorithms \cite{compact,DoerrZ20a,Doerr21,LenglerSW21}. See Section~\ref{sec:algoAndProblem} for an exposition of the algorithm.

In Theorem~\ref{thm:cGAOnRuggedOneMax} we show the cGA to be efficient on $\normal{0,\sigma^2}$-rugged \OneMax, even for arbitrarily large values of $\sigma^2$ (at the cost of a larger run time bound). Note that, since the optimum is no longer guaranteed to be at the all-1s string (and the global optimum will be just a rather random search point with a lot of $1$s, we only consider the time until the cGA reaches $n(1-\varepsilon)$ many $1$s (similar to \cite{paper2}).
The idea of the proof is to show that, with sufficiently high probability, no search point is evaluated twice in a run of the cGA; then the setting is identical to \OneMax with additive posterior noise and we can cite the corresponding theorem from the literature \cite{compact}. Thus, working with a distribution over the search space and adapting it during the search process leads to a less coarse-grained optimization process and presents a way to deal effectively with rugged fitness landscapes.

We contrast this positive result with negative results on search point based methods, namely random local search (RLS, maintaining a single individual and flipping a single bit each iteration, discarding the change if it worsened the fitness) and the so-called \oneoneEA (which operates like RLS, but instead of flipping a single bit, each bit is flipped with probability $1/n$), as well as with Random Search (RS, choosing a uniformly random bit string each iteration).

We first consider random local search on $\normal{0,\sigma^2}$-rugged \OneMax. Theorem~\ref{thm:RLSonNormal} shows that, for noise in $\Omega(\sqrt{\log n})$, RLS will not make it further than half way from the random start to the \OneMax-optimum and instead get stuck in a local optimum. The proof computes, in a rather lengthy technical lemma, the probability that a given search point has higher fitness than all of its neighbors.

Going a bit more into detail about what happens during the search, we consider the geometric distribution. In Theorem~\ref{rls} we prove that, even for constant variance, with high probability the algorithm is stuck after transitioning to a new search point at most $\log^2 (n)$ many times. This means that essentially no progress is made over the initial random search point, even for small noise values! The proof proceeds by showing that successive accepted search points have higher and higher fitness values; in fact, the fitness values grow faster than the number of $1$s in the underlying bit string. Since every new search point has to have higher fitness than the previous, it quickly is unfeasible to find even better search points (without going significantly higher in the $1$s of the bit string).

In Theorem~\ref{thm:OneOneEAonGeometric} we translate the result for RLS to the \oneoneEA. We require small but non-constant variance of $D$ to make up for the possibility of the \oneoneEA to jump further, but otherwise get the result with an analogous proof. To round off these findings, we show in Theorem~\ref{thm:RSonOM} that Random Search has a bound of $O(\sqrt{n\log n})$ for the number of $1$s found within a polynomial number of iterations.

In Section~\ref{sec:ExperimentsonOneMax} we give an experimental impression of the negative results. We depict that, within $n^2$ iterations, the proportion of $1$s in the best string found is decreasing in $n$ for all algorithms RLS, \oneoneEA and RS, where RS is significantly better than the \oneoneEA, RLS being the worst. 

This paper proceeds with some preliminaries on the technical details regarding the algorithms and problems considered. After the performance analyses of the different algorithms in Section~\ref{sec:cGAonOneMax} through~\ref{sec:randomSearch} and the experimental evaluation in Section~\ref{sec:ExperimentsonOneMax}, we conclude in Section~\ref{sec:conclusion}. We defer some technical lemmas to Appendix~\ref{sec:lemmas}.

\ignore{
Furthermore, we show that the distribution based approach based on the cGA is also able to outperform state of the art algorithms for noisy real-world problems. We demonstrate this for the well-known influence maximization problem in social networks, where the propagation of influence is stochastic. Our results show that the cGA significantly outperforms state or the art Pareto optimization approaches for this problem.

\subsection{Related Work}
 This paper \cite{timo} shows that a mutation-only Evolutionary Algorithm (\muoea) does not optimize noisy \OneMax in polynomial time when the variance is high. But cGA, a simple estimation of distribution algorithm which uses recombination can always optimize the \OneMax function with noise in polynomial time, if the noise variance $\sigma^2$ is bounded by some polynomial in $n$. 

 The work here \cite{paper2} shows that paired-crossover evolutionary algorithm(PCEA) with a population of $\Omega(\sqrt{n}$ log($n$)) solves \OneMax with noise in an expected run time of $O(\sqrt{n}$ log($n$)) with high probability. 
 
 The experimental study here \cite{poster} on a set of evolutionary algorithms on different noisy problems says that PCEA and UMDA handles noise well compared to the other algorithms considered.
 
 This \cite{paper} emphasis on importance of population on expected number of iterations while optimizing noisy versions of \OneMax and \LeadingOnes. It starts with giving an upper bound and a lower bound for ($1+1$)EA on \OneMax followed by a run time analysis for ($1+1$)EA on \OneMax in presence of different types of prior and posterior noises. Then upper bounds for EAs using parent populations (($\mu+1$)EA) and offspring populations(($1+\lambda$)EA) on noisy \OneMax are given. The same was studied on noisy \LeadingOnes also and both together lead to a surprising result that even very small population can lead to very high robustness to noise. 

 The paper here \cite{paper3} presents a first analysis of Ant Colony Optimization(ACO) for a stochastic shortest path problem (MMAS\textsubscript{SDSP}). A general upper bound until a approximation is found in presence of independent noise is explained followed by a lower bound in presence of independent gamma-distributed noise. A situation where the algorithm could not find an approximation even in exponential time is presented and it concludes by saying that the algorithm works efficiently when the noise is perfectly correlated.
}

\section{Algorithms and Problem Setting}
\label{sec:algoAndProblem}

In this section we define the $D$-rugged \OneMax problem and describe all the algorithms which we are analyzing in this paper. Random local search (RLS) on a fitness function $f$ is given in Algorithm \ref{alg:RLS}. RLS samples a point uniformly at random and at each step creates an offspring by randomly flipping a bit. At the end of each iteration it retains the best bit string available.
\vspace*{-.4cm}
\begin{myAlgorithm}
	Choose $x \in \{0,1\}^n$ uniformly at random\;\label{step:initialization_RLS}
	\While{stopping criterion not met}{
	$y$ $\gets$ flip one bit of $x$ chosen uniformly at random\;
	\lIf{$f(y) \geq f(x)$}{$x \gets y$\label{step:apdopt_current_best_RLS}}
	}
	\caption{RLS on fitness function $f$}\label{alg:RLS}
\end{myAlgorithm}
\vspace*{-.4cm}
The \oneoneEA on a fitness function $f$ is given in Algorithm \ref{alg:1+1EA}. The difference between RLS and \oneoneEA is that \oneoneEA creates an offspring by flipping each bit with probability $1/n$.
\vspace*{-.4cm}
\begin{myAlgorithm}
	Choose $x \in \{0,1\}^n$ uniformly at random\;\label{step:initialization_EA}
	\While{stopping criterion not met}{
	$y$ $\gets$ flip each bit of $x$ independently with probability $1/n$\;
	\lIf{$f(y) \geq f(x)$}{$x \gets y$\label{step:apdopt_current_best_EA}}
	}
	\caption{(1+1) EA on fitness function $f$}\label{alg:1+1EA}
\end{myAlgorithm}
\vspace*{-.4cm}
The \cGA on a fitness function $f$ is given in Algorithm \ref{alg:cGA}. This algorithm starts with two bit strings which have the probability of $1/2$ for each of their bit to be $1$. After each step this probability is updated based on the best bit string encountered.
\vspace*{-.4cm}
\begin{myAlgorithm}
    $t$ $\gets$ $0$, K $\gets$ initialize\;
    $p_{1,t}$ $\gets$ $p_{2,t}$ $\gets$ $\cdots$ $\gets$ $p_{n,t}$ $\gets$ $1/2$\;
    \While{termination criterion not met}{
        \For{$i \in \{1,\dotsc,n\}$}{
            $x_i \gets 1$ with probability $p_{i,t}$, $x_i \gets 0$ else\;
        }
        \For{$i \in \{1,\dotsc,n\}$}{
            $y_i \gets 1$ with probability $p_{i,t}$, $y_i \gets 0$ else\;
        }
        \lIf{$f(x) < f(y)$}{swap $x$ and $y$}
        \For{$i \in \{1,\dotsc,n\}$}{
        \lIf{$x_i > y_i$}{$p_{i,t+1} \gets p_{i,t} + 1/K$}
        \lIf{$x_i < y_i$}{$p_{i,t+1} \gets p_{i,t} - 1/K$}
        \lIf{$x_i = y_i$}{$p_{i,t+1} \gets p_{i,t}$}
        }        
        $t \gets t+1$\;
    }
	\caption{The compact GA on fitness function $f$}\label{alg:cGA}

\end{myAlgorithm}
\vspace*{-.4cm}

\subsection{$D$-Rugged OneMax}

To give a simple model for a rugged landscape with underlying gradient, we use a randomly perturbed version of the well-studied OneMax test function. We fix a dimension $n \in \natnum$ and a random distribution $D$. 
Then we choose, for every  $x \in \{0,1\}^n$, a random distortion $y_x$ from the distribution $D$. We define a $D$-rugged OneMax function as
$
\om \colon \{0,1\}^n \to \mathbb{R} := x \mapsto \norm{x}_1 + y_x$
where $\norm{x}_1 := |\set{i}{ x_i =1 }|$ is the number of $1$s in $x$.

In the following sections we show that the \cGA optimizes even very rugged distortions of $\om$ efficiently, while RLS will get stuck in a local optimum. 


\section{Performance of the \cGA}\label{sec:cGAonOneMax}

Let $D \sim N(0, \sigma^2)$. The following is Lemma 5 from \cite{compact}, which shows that, while optimizing $\om$, the probability that marginal probabilities falls a constant less than 1/2 is superpolynomially small in $n$. Note that here and in all other places in this paper, we give probabilities that range over the random choices of the algorithm as well as the random landscape of the instance.

\begin{lem}\label{lem:movingWeight}
Let $\varepsilon \in (0,1)$ and define
$$
M_\varepsilon = \set{p \in [0.25,1]^n}{ \sum_{i=1}^n p_i \leq n(1-\varepsilon)}.
$$
Then
$$
\max_{p,q \in M_\varepsilon} \sum_{i=1}^n (2p_iq_i-p_i-q_i) \leq - n \varepsilon/2.
$$
\end{lem}

\begin{lem}[{\cite{compact},\cite{genetic_drift}}]\label{lem:lowerbound}
Let $D \sim N(0, \sigma^2)$. Consider the \cGA optimizing $\om$ with $\sigma^2 > 0$. Let $0 < a < 1/2$ be an arbitrary constant and $T' = \min\{t \geq 0 : \exists i \in [n], p_{i,t} \leq a\}$. If $K = \omega(\sigma^2 \sqrt{n}\log n)$, then for every polynomial $\mathrm{poly}(n)$, n sufficiently large, $ \Pr(T' < \mathrm{poly}(n))$ is superpolynomially small.
\end{lem}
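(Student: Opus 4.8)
The plan is to follow the genetic-drift route: track a single marginal $p_{i,t}$ as a random process with increments in $\{-1/K,0,+1/K\}$, show it has a strictly positive (upward) drift throughout the band $[a,1/2]$, and then invoke a negative drift theorem together with a union bound over the $n$ bits. First I would fix a bit $i$ and condition on the entries of the two generated samples at all positions $j\neq i$. The frequency $p_{i,t}$ moves only when the two samples disagree at $i$; conditioning on such a disagreement, let the sample carrying the extra one have $S$ further ones and the other have $S'$, with $S,S'$ i.i.d. The step is upward iff the sample with the extra one wins, i.e. iff $(S+1)+\xi>S'+\xi'$, where $\xi,\xi'$ are the two independent $\normal{0,\sigma^2}$ perturbations; equivalently iff a $\normal{0,2\sigma^2}$ variable exceeds $S'-S-1$.

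Writing $D=S-S'$ (symmetric about $0$), the upward-minus-downward step probability given a disagreement at $i$ equals $\mathbb{E}\!\left[2\Phi\!\left((D+1)/(\sqrt2\,\sigma)\right)-1\right]$, where $\Phi$ is the standard normal CDF. A symmetrisation in $D$ shows this quantity is strictly positive and of order $1/\sqrt{\sigma^2+\Var D}$. Since all frequencies lie in $(a,1]$ before time $T'$, we have $\Var D=2\sum_{j\neq i}p_{j,t}(1-p_{j,t})=O(n)$, which is the worst (smallest-bias) case; multiplying by the disagreement probability $2p_{i,t}(1-p_{i,t})=\Omega(1)$ valid on $[a,1/2]$ and by the step size $1/K$ yields a drift bounded below by $\delta=\Omega\!\left(1/(K\sqrt{\sigma^2+n})\right)$ on this band. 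The hard part is precisely this drift estimate: establishing the lower bound $\Omega(1/\sqrt{\sigma^2+n})$ on the per-disagreement bias uniformly over all admissible configurations of the remaining frequencies, which requires careful Gaussian/erf estimates and a split into the regimes $\sigma^2\lesssim n$ and $\sigma^2\gtrsim n$.

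Finally I would rescale to the integer-valued process $q_t=Kp_{i,t}$, which moves by $\pm1$ or $0$, starts at $K/2$, and must descend the band to $aK$, a distance $L=(1/2-a)K$. With lower drift $\kappa=K\delta=\Omega(1/\sqrt{\sigma^2+n})$ and unit steps, the negative drift theorem (equivalently, the exponential supermartingale $\exp(-\kappa q_t)$ stopped at the band boundaries) bounds the probability that bit $i$ ever reaches $a$ within a polynomial number of steps by $\mathrm{poly}(n)\cdot\exp\!\left(-\Omega(\kappa L)\right)=\mathrm{poly}(n)\cdot\exp\!\left(-\Omega\!\left(K/\sqrt{\sigma^2+n}\right)\right)$. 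The hypothesis $K=\omega(\sigma^2\sqrt n\log n)$ forces $K/\sqrt{\sigma^2+n}=\omega(\log n)$ in both regimes, so this probability is $n^{-\omega(1)}$; a union bound over the $n$ bits then keeps $\Pr(T'<\mathrm{poly}(n))$ superpolynomially small, as claimed.
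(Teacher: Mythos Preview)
The paper does not prove this lemma at all: it is imported wholesale from \cite{compact} and \cite{genetic_drift}, where the corresponding statements are established for the cGA on \OneMax with posterior additive Gaussian noise and for genetic drift in general. Your proposal reconstructs precisely that argument---isolate one marginal, compute the per-step bias via a Gaussian comparison of the two offspring, obtain a drift of order $1/(K\sqrt{\sigma^2+n})$ on the band $[a,1/2]$, and finish with a negative-drift/supermartingale bound plus a union bound over the $n$ bits. Methodologically you are aligned with the cited references rather than diverging from them, and the paper offers nothing further to compare against.

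One point you should make explicit: you treat the two perturbations $\xi,\xi'$ as fresh independent $\normal{0,\sigma^2}$ samples, which is literally the \emph{noisy} \OneMax assumption of \cite{compact}, not the rugged one. In $\om$ the value $y_x$ is fixed once and for all, so conditioning on the algorithm's full history can in principle reveal it and break the martingale step. Within a single iteration your conditioning on a disagreement at bit $i$ already forces $x\neq y$, so $\xi,\xi'$ are attached to distinct strings; provided neither has been sampled before, they are still independent $\normal{0,\sigma^2}$ given the past, and your drift computation goes through verbatim. The paper closes exactly this gap separately via Theorem~\ref{thm:noDuplicates} (no repeated samples while the frequency vector lies in $M_\varepsilon$) and in the proof of Theorem~\ref{thm:cGAOnRuggedOneMax} only ever uses the present lemma in that regime; you should state the same caveat rather than silently identifying the rugged and the re-evaluated settings. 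A smaller quibble: your final implication ``$K=\omega(\sigma^2\sqrt{n}\log n)\Rightarrow K/\sqrt{\sigma^2+n}=\omega(\log n)$'' needs $\sigma^2=\Omega(1)$ in the regime $\sigma^2\lesssim n$; the hypothesis $\sigma^2>0$ alone does not give it.
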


\begin{thm}\label{thm:noDuplicates}
Let $\varepsilon \in (0,1)$ and define
$$
M_\varepsilon = \set{p \in [0.25,1]^n}{ \sum_{i=1}^n p_i \leq n(1-\varepsilon)}.
$$
Let $S \subseteq \{0,1\}^n$ be a random multi set of polynomial size (in $n$), where each member is drawn independently according to some marginal probabilities from $M_\varepsilon$. Then the probability that the multi set contains two identical bit strings is $2^{-\Omega(n\varepsilon)}$.
\end{thm}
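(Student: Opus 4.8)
The plan is to reduce the whole statement to a single pairwise collision bound and then finish with a union bound. First I would fix two members of the multiset, say $X$ and $Y$, drawn independently with marginal probability vectors $p,q \in M_\varepsilon$. Since the bits are sampled independently across coordinates, the event $X = Y$ factorizes. For a single bit $i$ we have $\Pr(X_i = Y_i) = p_iq_i + (1-p_i)(1-q_i) = 1 + (2p_iq_i - p_i - q_i)$, and therefore
$$
\Pr(X = Y) = \prod_{i=1}^n \bigl(1 + (2p_iq_i - p_i - q_i)\bigr).
$$

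The key step is to turn this product into the sum that is already controlled by \Cref{lem:movingWeight}. Observe that $2p_iq_i - p_i - q_i = -\bigl(p_i(1-q_i) + q_i(1-p_i)\bigr) \le 0$, so every factor lies in $[0,1]$ and I may apply the estimate $1+x \le e^x$ to get
$$
\Pr(X = Y) \le \exp\!\left(\sum_{i=1}^n (2p_iq_i - p_i - q_i)\right).
$$
Because $p,q \in M_\varepsilon$, \Cref{lem:movingWeight} bounds the exponent by $-n\varepsilon/2$, which yields $\Pr(X = Y) \le e^{-n\varepsilon/2} = 2^{-\Omega(n\varepsilon)}$ for any fixed pair. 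Note this bound is uniform over all pairs, since the lemma maximizes over all marginals in $M_\varepsilon$.

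Finally I would take a union bound over all unordered pairs. Writing $|S| = m = \mathrm{poly}(n)$,
$$
\Pr(\text{some two members coincide}) \le \binom{m}{2}\, e^{-n\varepsilon/2}.
$$
Since $m$ is polynomial in $n$, the prefactor is $2^{O(\log n)}$, which is absorbed by the exponential tail (for constant $\varepsilon$, and more generally whenever $n\varepsilon = \omega(\log n)$), leaving the claimed bound $2^{-\Omega(n\varepsilon)}$.

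I expect the only real content to be the second step: recognizing that, after the $1+x \le e^x$ estimate, the pairwise collision probability is governed exactly by the quantity $\sum_{i=1}^n (2p_iq_i - p_i - q_i)$ that \Cref{lem:movingWeight} already bounds. The coordinate-wise factorization, the nonpositivity of each summand, and the union bound are routine; the only place to be mildly careful is the polynomial-to-exponential absorption, which needs the regime of $\varepsilon$ (a constant here) so that $\log\binom{m}{2} = O(\log n)$ is dominated by $n\varepsilon/2$.
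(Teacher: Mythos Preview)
Your proposal is correct and follows essentially the same route as the paper: factorize the pairwise collision probability over coordinates, apply $1+x \le e^x$ to turn the product into $\exp\bigl(\sum_i (2p_iq_i - p_i - q_i)\bigr)$, invoke \Cref{lem:movingWeight} to bound the exponent by $-n\varepsilon/2$, and finish with a union bound over the polynomially many pairs. Your extra remarks about the nonpositivity of each summand and the absorption of the $\binom{m}{2}$ prefactor are fine but not strictly needed (the inequality $1+x\le e^x$ holds for all real $x$, and the paper simply leaves the absorption implicit).
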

\begin{proof}
Let $x,y \in S$ be given, based on marginal probabilities $p,q \in M_\varepsilon$, respectively. Using Lemma~\ref{lem:movingWeight} in the last step, we compute
\begin{align*}
P(x=y) &= \prod_{i=1}^n P(x_i=y_i)\\
 &= \prod_{i=1}^n (p_iq_i + (1-p_i)(1-q_i))\\
 &= \prod_{i=1}^n (1 - p_i - q_i + 2p_iq_i)\\
 &\leq \prod_{i=1}^n \exp(- p_i - q_i + 2p_iq_i)\\
 &\leq  \exp( \sum_{i=1}^n - p_i - q_i + 2p_iq_i)\\
 &\eqnComment{Lemma~\ref{lem:movingWeight}}{\leq} \exp( -n \varepsilon/2). 
\end{align*}
Since there are only polynomially many pairs of elements from $S$, we get the claim by an application of the union bound.
\end{proof}
\qed

From the preceding theorem we can now assume that the \cGA always samples previously unseen search points. We can use \cite[Lemma~4]{compact} which gives an additive drift of $O(\sqrt{n}/(K\sigma^2))$ in our setting. 

Now we can put all ingredients together and show the main theorem of this section.

\begin{thm}\label{thm:cGAOnRuggedOneMax}
Let $D \sim N(0, \sigma^2)$, $\sigma^2 > 0$ and let $\varepsilon$ be some constant. Then the \cGA with $K = \omega(\sigma^2 \sqrt{n}\log n)$ optimizes $\om$ up to a fitness of $n(1 - \varepsilon)$ within an expected number of $O(K\sqrt{n}\sigma^2)$ iterations.
\end{thm}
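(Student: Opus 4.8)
The plan is to reduce the rugged setting to the additive-posterior-noise setting and then transfer the known drift/runtime bounds. Since the initial marginals satisfy $\sum_i p_{i,0} = n/2$, I may assume $\varepsilon < 1/2$ (otherwise the target fitness $n(1-\varepsilon)$ is at most $n/2$ and there is nothing to show). Throughout I would track the potential $\phi_t = \sum_{i=1}^n p_{i,t}$ and analyze the phase of the run during which $\phi_t \le n(1-\varepsilon)$. The first step is to confine the marginals to the region $M_\varepsilon$: by Lemma~\ref{lem:lowerbound} with the constant $a = 1/4$, for $K = \omega(\sigma^2\sqrt n \log n)$ and over any polynomial number of steps, the event that some marginal ever drops below $1/4$ has superpolynomially small probability; together with $\phi_t \le n(1-\varepsilon)$ this places the marginal vector inside $M_\varepsilon$ throughout the relevant phase with overwhelming probability.

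The conceptual core is the reduction. On the region $M_\varepsilon$, Theorem~\ref{thm:noDuplicates} guarantees that the (polynomially large) multiset of all bit strings sampled during the run contains no two identical strings, except with probability $2^{-\Omega(n\varepsilon)}$. Conditioned on this no-duplicates event, every sampled bit string $x$ is evaluated exactly once, so its fixed perturbation $y_x \sim N(0,\sigma^2)$ is used only once and is independent of every other evaluation. Hence, conditioned on no duplicates, the sequence of fitness comparisons the cGA performs on $\om$ has exactly the same distribution as the cGA running on \OneMax with additive posterior noise $N(0,\sigma^2)$: in both models each evaluation contributes a fresh independent Gaussian. Consequently the trajectory of $(\phi_t)$ is distributed identically in the two models on this event, and the analysis for posterior noise applies.

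Next I would import the drift estimate. By the coupling just described together with \cite[Lemma~4]{compact}, while the marginals lie in $M_\varepsilon$ the potential $\phi_t$ admits a lower bound on its additive drift towards $n$ of order $\Omega(\sqrt n/(K\sigma^2))$ per iteration. The failure probabilities arising from Lemma~\ref{lem:lowerbound} and Theorem~\ref{thm:noDuplicates} are superpolynomially and exponentially small respectively, so they contribute only a negligible correction to the one-step drift and can be absorbed. Finally I would apply the additive drift theorem: the potential must climb the distance $n(1-\varepsilon)-n/2 = \Theta(n)$, and dividing by the drift $\Omega(\sqrt n/(K\sigma^2))$ yields an expected number of iterations $O(n)\cdot K\sigma^2/\sqrt n = O(K\sqrt n\,\sigma^2)$, as claimed. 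Alternatively, once the reduction is in place, one may simply invoke the corresponding runtime theorem of \cite{compact} verbatim.

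The main obstacle I expect is the careful handling of the conditioning. The no-duplicates event depends on the entire future trajectory, so one must argue cleanly that conditioning on it does not distort the expected runtime, for instance by bounding the contribution of the exponentially small failure event to $E[T]$, or by coupling step-by-step and folding the tiny discrepancy into the drift bound. A secondary point requiring care is translating a statement about the marginal sum $\phi_t$ reaching $n(1-\varepsilon)$ into a statement about an actually sampled string having fitness $n(1-\varepsilon)$; this needs a short concentration argument on the number of $1$s of a sample, or a slightly more conservative target for the marginals.
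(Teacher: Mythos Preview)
Your proposal is correct and follows essentially the same route as the paper: confine the marginals via Lemma~\ref{lem:lowerbound}, invoke Theorem~\ref{thm:noDuplicates} to reduce the rugged setting to additive posterior noise, import the drift bound from \cite[Lemma~4]{compact}, and finish with an additive drift theorem. Your treatment is in fact more careful than the paper's own proof, which does not discuss the conditioning issue or the gap between the marginal sum reaching $n(1-\varepsilon)$ and a sampled string attaining that fitness; the paper simply applies an additive drift theorem allowing overshooting \cite[Theorem~3.7]{MartinKDissertation} to the potential $X_t=\sum_i p_{i,t}$ and stops there.
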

\begin{proof}
We let $X_t = \sum_{i=1}^n p_{i,t}$ be the sum of all the marginal probabilities at time step $t$ of the \cGA. Using Lemma~\ref{lem:lowerbound}, we can assume that (for polynomially many time steps) the \cGA has marginal probabilities of at least $0.25$. Now we can employ Lemma~\ref{thm:noDuplicates} to see that the \cGA does not sample the same search point twice in a polynomial number of steps. Thus, as mentioned, we can use \cite[Lemma~4]{compact} to get an additive drift of $O(\sqrt{n}/(K\sigma^2))$ as long as $X_t$ does not reach a value of $n(1-\varepsilon)$.

The maximal value of $X_t$ is $n$, so we can use an additive drift theorem that allows for overshooting \cite[Theorem~3.7]{MartinKDissertation} to show the claim.
\end{proof}
\qed

\section{Performance of RLS}\label{sec:RLSonOneMax}

In this section we show that RLS cannot optimize rugged landscapes even for small values of ruggedness. We show that RLS will not find a solution with more than $3n/4$ ones with high probability. This implies that RLS will get stuck in a local optimum with a high probability because there are exponentially many points with number of ones more than $3n/4$ and the probability that none of this points have noise more than the noise associated with the best solution found by RLS is very low.

\begin{thm}\label{thm:RLSonNormal}
Let $D \sim N(0, \sigma^2)$. Let $\sigma^2 \geq 4\sqrt{2\ln(n+1)}$. Then there is a constant $c < 1$ such that RLS optimizing $\om$ will reach a solution of more than $3n/4$ many $1$s (within any finite time) will have a probability of at most $c$.
\end{thm}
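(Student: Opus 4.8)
The plan is to regard RLS as a random walk on $\strings$ that, because $D$ is continuous (so two search points have equal fitness with probability $0$, and by a finite union bound all $2^n$ fitness values are a.s.\ distinct), strictly increases the fitness $\om$ along every accepted move and becomes trapped forever exactly upon reaching a strict local optimum of $\om$, i.e.\ a point whose fitness exceeds that of all $n$ of its Hamming neighbors. It therefore suffices to produce a constant $\rho>0$ such that, with probability at least $\rho$, the walk gets trapped at a local optimum with at most $3n/4$ many $1$s: on that event RLS never reaches more than $3n/4$ many $1$s, so the statement follows with $c=1-\rho$.

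The first ingredient is the technical lemma computing the probability that a fixed point is a local optimum. Fix $x$ with $\norm{x}_1=j$ and distortion $y_x=t$, and let $\Phi$ denote the cdf of $\normal{0,\sigma^2}$. Flipping a $0$ produces a neighbor with $j+1$ ones that is worse than $x$ iff its (independent) distortion lies below $t-1$, while flipping a $1$ produces a neighbor with $j-1$ ones that is worse iff its distortion lies below $t+1$. Since the $n$ neighbors carry independent distortions,
\[
\Pr(x\text{ is a local optimum}\mid y_x=t)=\Phi(t-1)^{\,n-j}\,\Phi(t+1)^{\,j}.
\]
I would then read off from this a distortion threshold $t^\ast=t^\ast(j)$, uniform over $j\le 3n/4$ (where there are $\Omega(n)$ up-neighbors), above which the right-hand side is bounded below by a constant; quantitatively $t^\ast=\Theta(\sigma\sqrt{\log n})$, and the hypothesis $\sigma^2\ge 4\sqrt{2\ln(n+1)}$ is what calibrates this threshold against the distortion levels the walk can realize.

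The second ingredient tracks the distortion of the current point. Each accepted move replaces the current distortion $t$ by a fresh $\normal{0,\sigma^2}$ sample conditioned to exceed $t-1$ (an up-move) or $t+1$ (a down-move); conditioning a Gaussian to exceed a threshold biases it upward, which I would turn into an upward drift on the current distortion and, simultaneously, into the statement that the distortion must already be large before the number of $1$s can climb towards $3n/4$. The goal is to show that, with constant probability, the walk reaches a point whose distortion has crossed $t^\ast(j)$ while $j$ is still below $3n/4$; by the lemma such a point is a local optimum with constant probability, and the walk gets trapped there.

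I expect the main obstacle to be exactly this lengthy estimate together with its coupling to the walk: bounding $\Phi(t\pm1)^{\Theta(n)}$ tightly enough to pin down $t^\ast$, and then integrating this bound against the law of the current distortion, which is neither stationary nor independent of the neighbors the walk has already inspected. The cleanest route seems to be to decouple the two difficulties -- first establish a clean closed-form bound on the local-optimum probability as a function of $(j,t)$, and then feed only the threshold $t^\ast$ into a first-visit/drift argument for the distortion -- while absorbing the few revisited neighbors into an error term by a union bound in the spirit of Theorem~\ref{thm:noDuplicates}.
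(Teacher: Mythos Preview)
Your plan is a genuinely different route from the paper's, and closer in spirit to what the paper does for \emph{geometric} noise (Theorem~\ref{rls}) than to its own proof of Theorem~\ref{thm:RLSonNormal}. The paper does \emph{not} track the distortion of the current iterate at all. Instead it argues as follows: starting from $\le n/2$ ones, it fixes $\Theta(n)$ ``levels'' $n/2, n/2+3, n/2+6,\ldots,3n/4-3$ and looks at the first accepted point $x_i$ on each level. Because consecutive $x_i$ are at Hamming distance $\ge 3$, their neighborhoods are disjoint, so the events ``$x_i$ is a local optimum'' are independent (modulo the already-sampled neighbors, which only helps). The key technical input is Lemma~\ref{lem:minimumOfGaussians}, which \emph{integrates over the distortion of $x_i$} and shows directly that the \emph{unconditional} probability of $x_i$ being a local optimum is $\Omega(1/n)$; no threshold $t^\ast$ and no conditioning on the current distortion enter. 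With $\Theta(n)$ independent trials at success probability $\Omega(1/n)$, some $x_i$ is a local optimum with constant probability.

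Your approach may have a genuine gap, not just a lengthy estimate. For a Gaussian, the conditional expectation $E[Z\mid Z>a]$ exceeds $a$ by roughly $\sigma^2/a$ for large $a$, so the per-step ``drift'' of the distortion after an accepted up-move is about $-1+\sigma^2/(t-1)$ and turns negative once $t$ exceeds roughly $\sigma^2$. Under the stated hypothesis $\sigma^2\ge 4\sqrt{2\ln(n+1)}$ one has $\sigma\asymp(\ln n)^{1/4}$, so the drift equilibrium sits near $\sigma^2\asymp(\ln n)^{1/2}$, whereas your threshold $t^\ast\asymp\sigma\sqrt{\ln n}\asymp(\ln n)^{3/4}$ lies strictly above it. Hence drift alone will not push the distortion to $t^\ast$; you would need an additional deviation argument over the $\Omega(n)$ accepted moves, and making those deviations independent is precisely the level-spacing trick the paper uses. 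In short, the paper's integrate-then-count approach (Lemma~\ref{lem:minimumOfGaussians} plus disjoint neighborhoods) is both simpler and actually works at the stated variance; your condition-then-drift approach would likely need the stronger assumption $\sigma^2=\Omega(\ln n)$ to go through as written, and the paper explicitly remarks that carrying the drift-style analysis of Theorem~\ref{rls} over to Gaussian noise would require ``much more complicated'' tail estimates.
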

\begin{proof}
We consider the event $A_0$ that RLS successfully reached a fitness of $3n/4$ starting from an individual with at most $n/2$ many $1$s. 

With probability at least $1/2$ the initial search point of $RLS$ has at most $n/2$ many $1$s.

We define, for each level $i \in \{n/2, n/2 + 3, n/2+6,\ldots, 3n/4 - 3\}$, the first \emph{accepted} individual $x_i$ which RLS found on that level. For the event $A_0$ to hold, it must be the case that all $x_i$ are \emph{not} local optima. Any search points in the neighborhood of $x_i$ sampled previous to the encounter of $x_i$ will have a value less than $x_i$ (since $x_i$ is accepted) and the decision of whether $x_i$ is a local optimum depends only on the $k < n$ so far not sampled neighbors. Since two different $x_i$ have a Hamming distance of at least $3$, these neighbors are disjoint sets (for different $i$) and their noises are independent.

For any point $x$ to be a local optimum, it needs to have a higher fitness than any of its at most $n$ neighbors. We assume pessimistically that all neighbors have one more $1$ in the bit string and compute a lower bound on the probability that the random fitness of $x$ is higher than the random fitness of any neighbor by bounding the probability that a Gaussian random variable is larger than the largest of $n$ Gaussian random variables plus $1$. By scaling, this is the probability that some $\mathcal{N}(0, 1)$-distributed random variable is higher than the maximum of $n$ independent $\mathcal{N}(1/\sigma^2, 1)$-distributed random variables.

Using the symmetry of the normal distribution, this is equivalent to the probability that some $\mathcal{N}(1/\sigma^2, 1)$-distributed random variable is \emph{less} than the \emph{minimum} of $n$ $\mathcal{N}(0, 1)$-distributed random variables. This is exactly the setting of Lemma~\ref{lem:minimumOfGaussians}, where we pick $c = 1/\sigma^2$. Plugging in our bound for $\sigma^2$, we get a probability of $\Omega(1/n)$ that an arbitrary point in our landscape is a local optimum.

Thus we get with constant probability that one of the $\Theta(n)$ many $x_i$ is a local optimum. With this constant probability, event $A_0$ cannot occur, as desired.
\end{proof}
\qed

\subsection{Performance of RLS -- a detailed look}

We now want to give a tighter analysis of RLS on rugged \OneMax by showing that, in expectation, the noise of new accepted search points is growing. For the analysis, we will switch to a different noise model: We now assume our noise to be Geo$(p)$-distributed, for some $p \leq 1/2$. We believe that a similar analysis is also possible for normal-distributed noise, but in particular with much more complicated Chernoff bounds.

\begin{thm}\label{rls}
Let $p \leq 1/2$ and let $D \sim \Geo(p)$.
Then, for all $c$, the probability that RLS optimizing $D$-rugged \OneMax will transition to a better search point more than $log^2 (n)$ times is $O(n^{-c})$.

In particular, in this case, RLS does not make progress of $\Omega(n)$ over the initial solution and does not find the optimum.
\end{thm}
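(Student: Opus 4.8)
The plan is to track the sequence of \emph{accepted} search points $x^{(0)}, x^{(1)}, x^{(2)}, \dots$ that RLS visits, together with their number of ones $N_k := \norm{x^{(k)}}_1$, their noise values $g_k$, and their fitness $F_k = N_k + g_k$. The first thing I would establish is a \emph{freshness} property: since the accepted fitness is non-decreasing along the run, any neighbour that is once sampled and rejected (because its fitness is below the current fitness) can never be accepted later, as the acceptance threshold only increases. Hence every accepted point is revealed for the very first time at the moment it is accepted, so its noise is an independent $\Geo(p)$ sample. This is the analogue, for RLS, of the ``no duplicates'' reduction used for the \cGA, and it is what lets me treat successive noises as fresh and independent.

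Next I would exploit the memorylessness of the geometric distribution to understand the fitness increments. When RLS sits at $x^{(k)}$, a neighbour has $N_{k+1} \in \{N_k-1, N_k+1\}$ ones and is accepted iff its fresh noise $W$ satisfies $W \ge F_k - N_{k+1} = g_k + (N_k - N_{k+1})$. Conditioned on acceptance, $W$ exceeds this integer threshold by a fresh $\Geo(p)$ amount, so in every case $F_{k+1} - F_k$ is an independent $\Geo(p)$ variable, giving $F_m = F_0 + \sum_{j=1}^m E_j$ with $E_j \sim \Geo(p)$ i.i.d. Since each transition changes the number of ones by at most one, $N_m \le N_0 + m$, and therefore the ``noise gap'' obeys $g_m = F_m - N_m \ge g_0 + \sum_{j=1}^m (E_j - 1)$. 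This is the precise form of the heuristic ``the fitness grows faster than the number of $1$s'': each $E_j-1$ has mean $(1-p)/p - 1 = (1-2p)/p \ge 0$, which is a strictly positive constant whenever $p<1/2$ (and is pushed positive at the boundary by the extra $+1$ every time a bit is \emph{lost}, i.e.\ whenever $N$ does not increase). Because geometric variables have exponential tails, a Chernoff/Hoeffding-type bound yields $g_m \ge \Omega(m)$ with failure probability $\exp(-\Omega(m))$.

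I would then close the argument by setting $m = \log^2 n$ and running a ``stuck'' estimate. Conditioned on RLS reaching $m$ transitions, the accumulation above gives $g_m = \Omega(\log^2 n)$ except with probability $\exp(-\Omega(\log^2 n)) = n^{-\omega(1)}$. But an $(m+1)$-st transition requires \emph{some} neighbour of $x^{(m)}$ with noise at least $g_m - 1$; a union bound over the at most $n$ neighbours bounds this probability by $n\,(1-p)^{g_m-1} = n \cdot n^{-\Omega(\log n)} = n^{-\omega(1)}$. Adding the two super-polynomially small terms shows $\Pr[\#\text{transitions} > \log^2 n] = n^{-\omega(1)} = O(n^{-c})$ for every $c$. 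The ``in particular'' claim is then immediate: at most $\log^2 n$ transitions change the number of ones by at most $\log^2 n = o(n)$, so starting from the typical $N_0 = n/2 \pm O(\sqrt{n})$ the algorithm makes no $\Omega(n)$ progress and cannot reach the \OneMax-optimum.

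The hard part will be making the drift of the noise gap $g_k$ rigorous and, in particular, getting it to be strictly positive with super-polynomially small failure probability rather than merely in expectation. Two points need care: first, the freshness/memorylessness reduction must be argued cleanly so that the increments $E_j$ are genuinely i.i.d.\ $\Geo(p)$ despite the coupling between the landscape and which neighbour RLS happens to accept; second, the case $p$ close to $1/2$ is delicate, since there $(1-2p)/p \to 0$ and the drift coming from bit-gains alone degenerates, so one must lean on the strict fitness increase (the contribution of lost bits, or counting only strictly-improving transitions) to keep the drift bounded away from $0$. Everything else is routine tail estimation for sums of geometrics.
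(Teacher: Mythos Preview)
Your proposal is correct and follows essentially the same route as the paper: use memorylessness of $\Geo(p)$ to show the noise of the $t$-th accepted point is stochastically a sum of $t{+}1$ i.i.d.\ geometrics minus~$t$, apply a Chernoff bound for sums of geometrics at $t=\log^2 n$, then union-bound over the $n$ neighbours to show RLS is stuck. Your freshness argument and fitness-increment framing $F_{k+1}-F_k\sim\Geo(p)$ are cleaner than the paper's (which writes the equivalent $X_{t+1}\sim X_t-1+D$ and leaves freshness implicit), but note that your worry about $p$ near $1/2$ is a phantom: with the paper's convention $E[\Geo(p)]=1/p$ the drift of $E_j-1$ is $1/p-1=(1-p)/p\ge 1$, not $(1-2p)/p$, so the Chernoff parameter stays bounded away from zero (the paper gets $\delta\ge 1/4$ uniformly for $p\le 1/2$) and no special treatment of the boundary is required.
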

\begin{proof}
We consider the run of RLS optimizing $D$-rugged \OneMax and denote with $X_t$ the \emph{noise} of the $t$-th \emph{accepted} search point. We know that $X_0 \sim D$ and each next point has to be larger than at least the previous search point minus 1: in each iteration either a 0 bit or a 1 bit is flipped and RLS accepts the new search point only if its fitness value is greater than or equal to the previous search point.  

We will show that $X_t$ is, in expectation, growing. Furthermore we will show that, with high probability, for $t= \log^2(n)$ we have that $X_t \geq \log^2(n) / 2$. We finish the proof by showing that, with a search point with such a noise value, it is very unlikely to find a better search point.

Note that, for all $t$, we have that the distribution of $X_{t+1}$ is the distribution of $D$ conditional on being at least $X_t-1$ if a $0$ was flipped to a $1$ and $X_t+1$ otherwise. Pessimistically assuming the first case and since $D$ is memory-less, we get $X_{t+1} \sim X_t-1 + D$. In particular, since $E[D] = \frac{1}{p}$, we have $E[X_{t+1}] \geq E[X_t] + 1/p - 1$. 
Inductively we get
\begin{equation}
E[X_t] \geq E[X_0] + \frac{t}{p} - t.    
\end{equation}

Let the geometric random variable attached with each $X_t$ be $D_t$ and we have $X_{t} \sim X_{t-1}-1 + D_t $, therefore $X_t \sim \sum_{i = 0}^{t} D_i - t$. This implies $P(X_t \leq (\frac{t+1}{p}-t)/2)$ is nothing but $P( \sum_{i = 0}^{t} D_i \leq (\frac{t+1}{p}+t)/2)$. By using Chernoff bounds for the sum of independent geometric random variables \cite[Theorem~1.10.32]{Doerr_2019} and by letting $\delta = \frac{1}{2} - \frac{tp}{2(t+1)}$ we have, 
\begin{align*}
P\left(X_t \leq \left(\frac{t+1}{p} - t\right)/2\right) &=  P\left( \sum_{i = 0}^{t} D_i \leq \left(\frac{t+1}{p}+t\right)/2\right) \\
&= P\left( \sum_{i = 0}^{t} D_i \leq (1-\delta)\frac{t+1}{p}\right) \\
&\leq \exp\left(-\frac{\delta^2(t+1)}{2 - \frac{4\delta}{3}}\right).
\end{align*}
Since $p \leq \frac{1}{2}$, we have $\delta \geq \frac{1}{4}$. Therefore,
\[P\left(X_t \leq \left(\frac{t+1}{p} - t\right)/2\right) \leq \exp\left(-\frac{\delta^2(t+1)}{2 - \frac{4\delta}{3}}\right) \leq \exp\left(-\frac{3t}{80}\right)\]
When $t = \log^2 (n)$, 
\[P(X_t \leq \left(\frac{t+1}{p} - t\right)/2) \leq \exp\left(-\frac{3t}{80}\right) = n^{-\frac{3}{80}\log(n)}.\]
Assume that we sampled a search point with noise at least $ m = \frac{ t + 1}{2p} - \frac{t}{2}$, where $t = \log^2 (n)$. 
For a neighbor of the current search point to have higher fitness it should have at least $m - 1$ or $m + 1$ noise, depending on whether it has an extra 1 bit or an extra zero bit. The probability for this to happen is,
\begin{align*}
P(D \geq m + 1) &\leq P(D \geq m - 1)\\
    &= p(1-p)^{-2}(1-p)^{m}    \\
    &\leq e^{\frac{1}{2}}n^{-\frac{1}{4}\log(n)}.
\end{align*}
Using this, we will show that once a search point with noise at least $m$ is sampled, the probability that at least one of the neighbours is of higher fitness is $O\left(n^{1-\frac{\log(n)}{4}}\right)$. Let $D_{m_{1}}, \dots , D_{m_{n}}$ denote the random geometric noise associated with the neighbors of the current search point with at least noise $m$. Then probability that at least one of the neighbours is of higher fitness is 
\begin{align*}
    &\leq P(D_{m_{1}} \geq m - 1 \cup \dots \cup D_{m_{n}} \geq m - 1)\\
    &\leq \sum_{i=1}^{n}  P(D_{m_{i}} \geq m - 1)\\
    & \leq e^{\frac{1}{2}}n^{1-\frac{\log(n)}{4}}.
\end{align*}


\end{proof}
\qed

\section{Performance of the (1+1) EA}

In this section we extend the analysis given for RLS in Theorem~\ref{rls} to the \oneoneEA.

\begin{thm}\label{thm:OneOneEAonGeometric}
Let $p \leq 1/(2\log(n))$ and let $D \sim \Geo(p)$.
Then, for all $c > 0$ and $k \in \natnum$, the probability that the \oneoneEA optimizing $D$-rugged \OneMax will transition to a better search point more than $\log^2 (n)$ times within $n^k$ steps is $O(n^{-c})$.

In particular, the probability that the \oneoneEA makes progress of $\Omega(n)$ over the initial solution within $n^k$ steps is $O(n^{-c})$ and thus does not find the optimum.
\end{thm}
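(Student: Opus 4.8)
The plan is to mirror the proof of Theorem~\ref{rls} for RLS, tracking the noise $X_t$ of the $t$-th accepted search point and showing it drifts upward until it is so large that no improving point can be found within $n^k$ steps. The one genuinely new feature is that the \oneoneEA can flip several bits at once, so a single accepted transition may increase $\norm{x}_1$ by more than one; this both weakens the per-transition growth of $X_t$ and replaces the ``$n$ neighbours'' union bound of the RLS proof by a union bound over all mutations sampled in $n^k$ steps. This is exactly why the hypothesis is strengthened from $p \leq 1/2$ to $p \leq 1/(2\log n)$.

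First I would control the jump size. Let $E_1$ be the event that no mutation in the first $n^k$ steps flips more than $\log n$ bits. Since the number of flipped bits is $\Bin(n,1/n)$, we have $\Pr(\Bin(n,1/n) \geq \log n) \leq 1/(\log n)! = n^{-\omega(1)}$, and a union bound over $n^k$ steps gives $\Pr(\overline{E_1}) = n^{-\omega(1)}$. On $E_1$, whenever an accepted $x$ is replaced by $y$, then $\Delta := \norm{y}_1 - \norm{x}_1 \leq \log n$, and acceptance forces the noise of $y$ to be at least $X_t - \Delta \geq X_t - \log n$. Using memorylessness of the geometric distribution exactly as in Theorem~\ref{rls} (on a freshly sampled point), the next accepted noise stochastically dominates $X_t - \log n + \Geo(p)$, so $E[X_{t+1}] \geq E[X_t] + 1/p - \log n$. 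Here the new threshold enters: $p \leq 1/(2\log n)$ guarantees $1/p - \log n \geq \log n > 0$, so the noise still drifts upward despite the larger jumps.

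Next, writing $X_t$ as dominating $\sum_{i=0}^{t} D_i - t\log n$ with independent $D_i \sim \Geo(p)$, I would apply the same Chernoff bound for sums of geometric variables as in Theorem~\ref{rls}. Setting $t = \log^2 n$ and $m := \frac{t+1}{2p} - \frac{t\log n}{2} = \Omega(\log^3 n)$, the relevant deviation parameter is again $\delta = \frac12 - \frac{pt\log n}{2(t+1)} \geq \frac14$ (using $p \leq 1/(2\log n)$), so $\Pr(X_t \leq m) \leq \exp(-\Omega(t)) = n^{-\Omega(\log n)}$. For a $(\log^2 n + 1)$-th transition to occur, some sampled point would need noise at least $m - \log n$ (again using $\Delta \leq \log n$ on $E_1$). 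The geometric tail gives $\Pr(D \geq m - \log n) \leq \exp(-p(m-\log n)) \leq \exp(-t/4) = n^{-\log(n)/4}$, where the middle step once more uses $p \leq 1/(2\log n)$ to bound $p(m-\log n) \geq t/4$. A union bound over the at most $n^k$ distinct points sampled shows that, with probability $1 - n^{k - \log(n)/4} = 1 - n^{-\omega(1)}$, no sampled point has noise $\geq m - \log n$, on which event no transition past the $(\log^2 n)$-th is possible. Combining the three failure events ($\overline{E_1}$, the noise failing to reach $m$, and some point exceeding the tail threshold) yields $\Pr(T > \log^2 n) = n^{-\omega(1)} = O(n^{-c})$ for every $c$. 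The progress statement then follows since at most $\log^2 n$ transitions, each raising $\norm{x}_1$ by at most $\log n$ on $E_1$, move the number of $1$s by at most $\log^3 n = o(n)$.

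The main obstacle is the first step: bounding and then paying for the multi-bit jumps. Everything downstream (drift, Chernoff concentration, geometric tail) is structurally identical to the RLS argument once the per-transition loss of $1$ is replaced by $\log n$; the delicate points are (i) verifying that the bound of $\log n$ flipped bits holds throughout all $n^k$ steps while simultaneously capping $\Delta$, and (ii) re-tuning $p \leq 1/(2\log n)$ so that both the upward drift $1/p - \log n$ and the tail exponent $p(m-\log n)$ remain $\Omega(\log n)$ and $\Omega(\log^2 n)$, respectively. A minor technical point, handled as in the RLS proof, is the appeal to memorylessness, which presumes the newly accepted point is freshly sampled.
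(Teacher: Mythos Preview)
Your proposal is correct and follows essentially the same route as the paper: bound the number of bits flipped per mutation uniformly over $n^k$ steps, then replay the RLS argument with the per-transition loss of $1$ replaced by (roughly) $\log n$, using $p \leq 1/(2\log n)$ to keep $\delta \geq 1/4$ in the geometric Chernoff bound and to make the final tail exponent $\Omega(\log^2 n)$. The only cosmetic difference is that the paper caps the jump at $ck\log n - 1$ via a multiplicative Chernoff bound on $\Bin(n,1/n)$, whereas you cap it at $\log n$ via the sharper estimate $\Pr(\Bin(n,1/n)\geq \log n)\leq 1/(\log n)!$; either choice feeds into the same downstream computation.
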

\begin{proof}
We first show that, for $c > 0$, in any iteration  the new accepted search point by \oneoneEA does not have more than $ck\log(n) - 1$ ones than the previous accepted point with probability at least $1-O(n^{-c})$. Then we assume the worst case scenario that the new search point has $ck\log(n) - 1$ more ones than the previous search point to proceed with the proof similar to Theorem \ref{rls} for RLS.

Let $X$ denote the number of bit flips happened to get the current search point. Then $X \sim \text{Bin}(n,1/n)$ (which has an expectation of $1$). If the current search point has $ck\log(n)-1$ more ones than the previous search point then $X$ has to be at least $ck\log(n) - 1$. By a multiplicative Chernoff bound for the sum of independent Bernoulli trails, if $\delta = ck\log(n) - 2$,
\begin{align*}
P(X \geq ck\log(n) - 1) 
    &= P(X \geq 1 + \delta)\\
    &\leq \exp\left(- \frac{(ck\log(n) -2)^2}{ck\log(n)}\right)\\
    &\leq \frac{e^4}{n^{ck}}.
\end{align*}
A union bound over $n^k$ iterations gives that the probability that within any of these iterations the \oneoneEA jumps further than $ck\log(n) -1$ is $O(n^{-c})$.

Consider now the run of the \oneoneEA optimizing $D-$rugged \OneMax and let $X_t$ denote the noise associated with the $t-$th accepted search point. Similar to Theorem~\ref{rls}, we now show the following. (1) $X_t$ grows in expectation. (2) For $t = \log^2 (n)$, with high probability $X_t \geq \log^2 (n)/2$. (3) If we have a search point with noise greater than $\log^2 (n)/2$, the probability to find a better search point within $n^k$ steps is very low.
We pessimistically assume that the $t$-th accepted search point has $\log(n) - 1$ ones more than the previous search point. Since $D$ is memory-less, we get $X_t \sim X_{t-1} - \log(n) + 1 + D$. This inductively along with the fact that $X_0 = D$ implies,
\[E[X_t] \geq \frac{t+1}{p} - t\log(n) + t.\]
Let $D_t$ be the geometric random variable associated with $X_t$. Then we have $X_t \sim X_{t-1} - \log(n) +1 + D_t$, which is $X_t \sim \sum_{i=0}^{t} D_i - t\log(n) + t$. If we let $\delta = \frac{1}{2} + \frac{tp}{2(t+1)} - \frac{tp\log(n)}{2(t+1)}$, $a_t = \left(\frac{t+1}{p} - t\log(n) + t\right)/2$ and use Chernoff bounds for the sum of independent geometric random variables \cite[Theorem~1.10.32]{Doerr_2019} we have, 
\begin{align*}
P\left(X_t \leq a_t\right)
 &=  P\left(\sum_{i = 0}^{t} D_i \leq\frac{t+1}{2p}+\frac{t\log(n)}{2}-\frac{t}{2}\right) \\
&= P\left( \sum_{i = 0}^{t} D_i \leq (1-\delta)\frac{t+1}{p}\right) \\
& \leq \exp\left(-\frac{\delta^2(t+1)}{2 - \frac{4\delta}{3}}\right).
\end{align*}
Since $p \leq \frac{1}{2\log(n)}$, we have $\delta \geq \frac{1}{4}$. Therefore,
\begin{align*}
P\left(X_t \leq \left(\frac{t+1}{p} - t\log(n) + t\right)/2\right)
    & \leq \exp\left(-\frac{\delta^2(t+1)}{2 - \frac{4\delta}{3}}\right) \leq \exp\left(-\frac{3t}{80}\right).
\end{align*}
When $t = \log^2 (n)$, 
\begin{align*}
P\left(X_t \leq \left(\frac{t+1}{p} - t\log(n) + t\right)/2\right)
    &\leq \exp\left(-\frac{3t}{80}\right)\\
    &=n^{-\frac{3}{80}\log(n)}.
\end{align*}
Now assume that we sampled a search point with noise at least $ m = \frac{ t + 1}{2p} - \frac{t\log(n)}{2} + \frac{t}{2}$, where $t = \log^2 (n)$. As we have seen at any given iteration the probability that the standard mutation operator flips more than $ck\log(n) - 1$ bits is very low, we will again analyze the worst case scenario. For a neighbor of the current search point with at most $ck\log(n) - 1$ to have higher fitness it should have at least $m - ck\log(n)$ noise. The probability for this to happen is, 
\begin{align*}
P(D \geq m - ck\log n + 1)  = p(1-p)^{m - ck\log(n)}
 \leq e^{-p(m-ck\log(n))} = e^{\frac{ck}{2}} n^{-\frac{1}{4}\log n}
\end{align*}
For a given $k$, let $D_{m_{1}}, \dots , D_{m_{n^k}}$ denote the random geometric noise associated with $n^k$ neighbors of the current search point with at least noise $m$. Then probability that within $n^k$ steps at least one of the neighbours will be of higher fitness is at most
\begin{align*}
P\left(\bigcup_{i = 1}^{n}(D_{m_{i}} \geq m - ck\log(n) + 1)\right)
    &\leq \sum_{i=1}^{n^k}  P(D_{m_{i}} \geq m - ck\log(n) + 1)\\
    & \leq e^{\frac{ck}{2}}n^{k-\frac{\log(n)}{4}}.
\end{align*}
\end{proof}
\qed

\section{Performance of Random Search}
\label{sec:randomSearch}

For comparison with the performance of RLS and the \oneoneEA, we briefly consider Random Search (RS) in this section. We state the theorem that Random Search has a bound of $O(\sqrt{n\log n})$ for the number of $1$s found within a polynomial number of iterations and can be proved by Chernoff bounds.

\begin{thm}\label{thm:RSonOM}
Let $c >0$ be given and $t \geq 1$. Then the bit string with the most number of $1$s found by Random Search within $t \leq n^k$ iterations, choosing a uniformly random bit string each iteration, has at most $n/2 + O(\sqrt{n \log n})$ $1$s with probability $1-O(n^{-c})$.
\end{thm}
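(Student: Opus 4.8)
The plan is to reduce everything to the behaviour of a single uniformly random bit string and then pay for the $t \le n^k$ iterations with a union bound. In each iteration Random Search draws $x \in \{0,1\}^n$ uniformly at random, so the number of ones $\norm{x}_1$ is distributed as $\Bin(n,1/2)$ with mean $n/2$, independently across iterations. The whole statement then amounts to controlling the maximum of $t \le n^k$ independent $\Bin(n,1/2)$ samples.

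First I would fix a deviation parameter $s = C\sqrt{n\log n}$, where $C>0$ is a constant to be chosen at the end depending only on $c$ and $k$, and bound the upper tail of a single sample. By a standard Chernoff bound for sums of independent Bernoulli trials \cite[Theorem~1.10.32]{Doerr_2019} one gets $P(\norm{x}_1 \ge n/2 + s) \le \exp(-2s^2/n)$ (the precise constant in the exponent is immaterial). Substituting $s = C\sqrt{n\log n}$ turns this into $\exp(-2C^2\log n) = n^{-2C^2}$, i.e.\ a per-iteration failure probability that is polynomially small with an exponent we can make as large as we like by increasing $C$.

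I would then apply the union bound over all $t \le n^k$ iterations: the probability that \emph{some} iteration produces a string with more than $n/2 + s$ ones is at most $t\cdot n^{-2C^2} \le n^{k-2C^2}$. Choosing $C$ so that $2C^2 \ge k + c$ (for instance $C = \sqrt{(k+c)/2}$) bounds this by $n^{-c} = O(n^{-c})$. On the complementary event, every string ever sampled --- and in particular the one with the most ones found --- has at most $n/2 + s = n/2 + O(\sqrt{n\log n})$ ones, which is exactly the claim. There is no genuine obstacle beyond book-keeping: the only point that needs care is to let the constant $C$ (and hence the hidden constant in $O(\sqrt{n\log n})$) depend on both the confidence parameter $c$ and the polynomial degree $k$, so that the single-sample tail is small enough to survive the union bound over $n^k$ iterations.
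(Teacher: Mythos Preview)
Your proposal is correct and matches the paper's own treatment: the paper does not spell out a proof but simply remarks that the statement ``can be proved by Chernoff bounds,'' and your argument is precisely the standard Chernoff-plus-union-bound computation that this remark points to.
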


\section{Experimental Evaluation}\label{sec:ExperimentsonOneMax}

In this section we empirically analyze the performance of the cGA, the RLS, the \oneoneEA and the Random Search algorithms on the rugged \OneMax with two different noise models. We considered noise sampled from the normal distribution with mean zero and variance 5 and another noise model sampled from the geometric distribution with variance 5. From the results we can see that after $n^2$ iterations, where $n$ is length of the bit string, the RLS and the \oneoneEA  does not sample a search point with more than $60\%$ of $1$s but the cGA with $K = \sqrt{n}\log(n)$ always finds the optimum. The plot in Figure~\ref{fig:rls_ooea} is mean of 100 independent runs of each algorithm for bit string lengths $100$ to $1000$ with step size $100$. To have closer look at the performance of the other algorithms, performance of the  cGA(straight line at 100) is removed from the plot.

%
\begin{figure}[htbp]
  \centering
    \includegraphics[width=0.7\columnwidth]{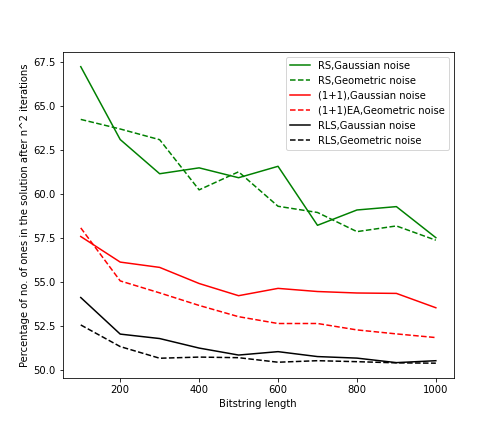}
    \caption{Percentage of ones in the sampled search point after $n^2$ iterations of Random Search, \oneoneEA and RLS on optimizing rugged OneMax function with Normal and Geometric noise both having variance of $5$. Performance of cGA which is a straight line at 100\% is omitted.}
    \label{fig:rls_ooea}
\end{figure}
%
Each time the first search point was set to have  $50\%$ of $1$s. As $n$ increases the percentage of ones in the search point sampled by RLS and \oneoneEA after $n^2$ iteration tends to $50\%$ and it tends to less than $60\%$ in case of random search.

\section{Conclusion}\label{sec:conclusion}

Rugged fitness landscapes appear in many real-world problems and may lead to algorithms getting stuck in local optima. We investigated this problem in this paper for the rugged \OneMax problem which is obtained through a noisy \OneMax fitness function where each search point is only evaluated once. We have shown that RLS and the (1+1)~EA can only achieve small improvements on an initial solution chosen uniformly at random. In contrast to this the cGA is able to improve solution quality significantly until it is (almost) optimal. Our experimental investigations show this behaviour for realistic input sizes and also point out that RLS and the \oneoneEA perform significantly worse than random search on the rugged \OneMax problem.

\section*{Acknowledgements}
This work was supported by the Australian Research Council through grant FT200100536.

\bibliographystyle{splncs04}
%
\bibliography{main}





\appendix
\section{Important Lemmas}\label{sec:lemmas}
In the following lemma we will show that the probability of choosing a point from a normal distribution with mean greater than zero which will be less than minimum of $n$ independently distributed points from $\mathcal{N}(0, 1)$ is $\Omega(1/n)$.
\begin{lem}\label{lem:minimumOfGaussians}
Let $c > 0$ and, for all $i \leq n$, $Z_i \sim \mathcal{N}(0, 1)$ and $Y_c \sim \mathcal{N}(c, 1)$ all independently distributed. Let $X_n = \min_{i \leq n} Z_i$($n$th order statistic of $z_i$'s\cite{David2011}). Then we have that, for all $c \in \realnum_+$ and all $n$, 
$$
P(Y_{c} < X_n) > \frac{1}{n+1}\left(1 - c\left(1 + \sqrt{2\ln(n+1)} +  \sqrt{2\ln(1/c)} + \frac{c}{2}\right)\right).
$$
\end{lem}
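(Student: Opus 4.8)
The plan is to anchor everything at the shift-free case $c=0$, where the answer is exactly $1/(n+1)$ by symmetry, and then to bound the probability mass lost when the mean is moved to $c$. Throughout, let $\phi$ and $\Phi$ denote the standard normal density and distribution function and $\overline\Phi=1-\Phi$. Writing $Y_c=Y_0+c$ with $Y_0\sim\mathcal N(0,1)$ independent of $Z_1,\dots,Z_n$, the variables $Y_0,Z_1,\dots,Z_n$ are $n+1$ i.i.d.\ continuous draws, so each is almost surely the unique minimum with probability $1/(n+1)$; hence $P(Y_0<X_n)=1/(n+1)$. Since enlarging the shift only shrinks the event,
\[
P(Y_c<X_n)=\frac{1}{n+1}-D,\qquad D:=P\left(Y_0<X_n\le Y_0+c\right),
\]
and it suffices to show $D<\frac{c}{n+1}\bigl(1+\sqrt{2\ln(n+1)}+\sqrt{2\ln(1/c)}+c/2\bigr)$.

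First I would rewrite $D$ as an expectation over the minimum. Conditioning on $X_n=\min_{i\le n}Z_i$, which is independent of $Y_0$, gives $D=\mathbb E\bigl[\Phi(X_n)-\Phi(X_n-c)\bigr]$. On the event $\{X_n\le 0\}$ the density $\phi$ is increasing on $[X_n-c,X_n]$, so $\Phi(X_n)-\Phi(X_n-c)=\int_{X_n-c}^{X_n}\phi\le c\,\phi(X_n)$; on the complementary event $\{X_n>0\}$, of probability $2^{-n}$, the difference is at most $c/\sqrt{2\pi}$. Thus $D\le c\,\mathbb E[\phi(X_n)]+c\,2^{-n}/\sqrt{2\pi}$, and the task reduces to bounding $\mathbb E[\phi(X_n)]$ by roughly $(1+\sqrt{2\ln(n+1)})/(n+1)$.

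To estimate $\mathbb E[\phi(X_n)]$ I would use the order-statistic density $f_{X_n}(x)=n\,\phi(x)\,\overline\Phi(x)^{n-1}$ and the substitution $u=\overline\Phi(x)$, giving
\[
\mathbb E[\phi(X_n)]=n\int_0^1\phi\bigl(\Phi^{-1}(1-u)\bigr)\,u^{n-1}\,du.
\]
The Mills-ratio inequalities $\overline\Phi(t)\,t\le\phi(t)\le\overline\Phi(t)(1+t)$ together with the tail bound $\overline\Phi(t)\le\tfrac12e^{-t^2/2}$ yield the quantile estimate $\phi(\Phi^{-1}(1-u))\le(1-u)\bigl(1+\sqrt{2\ln(1/(1-u))}\bigr)$. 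Splitting the resulting integral into its two summands, the first gives $n\int_0^1(1-u)u^{n-1}\,du=n\,B(n,2)=1/(n+1)$ exactly, and the second, $n\int_0^1(1-u)\sqrt{2\ln(1/(1-u))}\,u^{n-1}\,du$, is controlled by Jensen's inequality applied to the concave $\sqrt{\cdot}$ over the $\mathrm{Beta}(n,2)$ weight, using $\mathbb E[-\ln(1-U)]=\psi(n+2)-\psi(2)\le\ln(n+1)$; this contributes $\sqrt{2\ln(n+1)}/(n+1)$. Combining, $D\le\frac{c}{n+1}\bigl(1+\sqrt{2\ln(n+1)}\bigr)+c\,2^{-n}/\sqrt{2\pi}$, which is already at least as strong as the claim: the extra $\sqrt{2\ln(1/c)}+c/2$ in the statement is slack that absorbs the exponentially small $2^{-n}$ correction, so the strict inequality follows.

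The main obstacle is this last analytic step. The heuristic that $u^{n-1}$ concentrates its mass at $1-u=\Theta(1/n)$, precisely the scale at which the minimum of $n$ Gaussians sits at height $-\Theta(\sqrt{\ln n})$, is transparent; turning it into a closed-form bound with honest constants is the delicate part, and it is where the $\sqrt{2\ln(n+1)}$ term is really earned (via the quantile estimate and the digamma/Jensen bound above). The two extra terms in the statement suggest a coarser route in which the integral, or equivalently the real line for $X_n$, is split at a tail threshold of scale $\sqrt{2\ln(1/c)}$; I would instead keep the cleaner $\mathrm{Beta}(n,2)$ computation, whose only real work is verifying the quantile inequality and that the deep-tail remainder is genuinely negligible. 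By contrast, the two probabilistic reductions in the first paragraphs are routine.
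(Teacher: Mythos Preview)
Your argument is correct and takes a genuinely different route from the paper's. The paper writes $P(Y_c<X_n)$ as the integral $\int \phi(x-c)\,\overline\Phi(x)^n\,dx$, expands $\phi(x-c)=\phi(x)e^{cx-c^2/2}$, truncates the range at a threshold $a=-\sqrt{2}(\sqrt{\ln(n+1)}+\sqrt{\ln(1/c)})$, lower-bounds $e^{c(a-c/2)}\ge 1+c(a-c/2)$, and controls the removed tail via an $\erfc$ estimate; the two extra terms $\sqrt{2\ln(1/c)}+c/2$ in the statement are exactly the price of this truncation. You instead anchor at the symmetric case $P(Y_0<X_n)=1/(n+1)$ and bound the defect $D=\mathbb E[\Phi(X_n)-\Phi(X_n-c)]\le c\,\mathbb E[\phi(X_n)]+c\,2^{-n}/\sqrt{2\pi}$, then evaluate $\mathbb E[\phi(X_n)]$ by the substitution $u=\overline\Phi(x)$ and a $\mathrm{Beta}(n,2)$/Jensen computation. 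This is cleaner and in fact sharper: it delivers $P(Y_c<X_n)\ge \frac{1}{n+1}\bigl(1-c(1+\sqrt{2\ln(n+1)})\bigr)-c\,2^{-n}/\sqrt{2\pi}$, which dominates the stated bound since $\sqrt{2\ln(1/c)}+c/2\ge \tfrac12 > (n+1)2^{-n}/\sqrt{2\pi}$ for all $n\ge 1$ and $c\in(0,1)$. Two small points worth making explicit in a write-up: the Mills-type inequality $\phi(s)\le(1+s)\,\overline\Phi(s)$ for $s\ge 0$ (which follows from $\overline\Phi(s)\ge 2\phi(s)/(s+\sqrt{s^2+4})$ and $\sqrt{s^2+4}\le s+2$) is what makes your quantile bound work on the relevant half $u\ge\tfrac12$, while on $u<\tfrac12$ it holds trivially since $\phi\le 1/\sqrt{2\pi}<\tfrac12$; and the digamma identity gives $\psi(n+2)-\psi(2)=H_{n+1}-1\le\ln(n+1)$ exactly as you use it.
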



\begin{proof}


We start by computing the probability that the random variable $Z_i$ is at least a given value $x$. To this end, let $i \leq n$ and $x \in \realnum$. We have
$$
 P(Z_i > x) = 1 - P(Z_i \leq x) = 1 - \frac{1}{2} - \frac{1}{2}\erf\left(\frac{x}{\sqrt{2}}\right) = \frac{1}{2} \erfc\left(\frac{x}{\sqrt{2}}\right).
$$
 
Now let $f$ be the probability density function of $Y_{c}$ and let $a = -\sqrt{2}(\sqrt{\ln(n+1)} +  \sqrt{\ln(1/c)})$. We have

\begin{align*}
 P(Y < X_n) &= \int_{-\infty}^{\infty} f(x) P(X_n > x)\,dx \\
 &= \int_{-\infty}^{\infty} f(x) P(\bigcap_{i =1}^n (Z_i > x))\,dx \\ 
 &= \int_{-\infty}^{\infty} f(x)  \prod_{i=1}^{n}P(Z_i > x)\,dx \\
 &= \int_{-\infty}^{\infty} \frac{1}{\sqrt{2\pi}} e^{-\frac{(x-c)^2}{2}} \left(\frac{1}{2}  \erfc\left(\frac{x}{\sqrt{2}}\right)\right)^n\,dx\\
&= \int_{-\infty}^{\infty} \frac{1}{\sqrt{2\pi}} e^{-\frac{x^2}{2}} e^{-\frac{c^2}{2}} e^{cx}\left(\frac{1}{2}  \erfc\left(\frac{x}{\sqrt{2}}\right)\right)^n\,dx\\
&> e^{-\frac{c^2}{2}} e^{ca}\int_{a}^{\infty} \frac{1}{\sqrt{2\pi}} e^{-\frac{x^2}{2}} \left(\frac{1}{2}  \erfc\left(\frac{x}{\sqrt{2}}\right)\right)^n\,dx.\\
\end{align*}

We continue by analyzing the integral and the factor before the integral separately, starting with the factor. We use the value of  $a$ 
and get
\begin{align*}
e^{c(a-\frac{c}{2})} 
    &\geq 1 + c\left(a-\frac{c}{2}\right) \\
    &\geq 1 - c\left(\sqrt{2\ln(n+1)} +  \sqrt{2\ln(1/c)} + \frac{c}{2}\right).
\end{align*}
  Now we turn to bounding the integral. Lets integrate over the function $g(x) = \frac{1}{\sqrt{2\pi}} e^{-\frac{x^2}{2}} \left(\frac{1}{2}  \erfc\left(\frac{x}{\sqrt{2}}\right)\right)^n$
  . By symmetry, the probability of a $\normal{0,1}$-distributed random variable being smaller than $X_n$ is $1/(n+1)$. Thus, we get
\begin{align*}
\int_{a}^\infty g(x) dx 
    &= \int_{-\infty}^{\infty} g(x) \,dx - \int_{-\infty}^a g(x) dx\\
    & = \frac{1}{n+1} - \int_{-\infty}^a g(x) dx\\
    & \geq \frac{1}{n+1} - \int_{-\infty}^a \exp(-x^2/2)/\sqrt{2\pi} dx\\
    & =  \frac{1}{n+1} - \frac{1}{2}(1+\erf(a/\sqrt{2}))\\
    & =  \frac{1}{n+1} - \frac{1}{2}(2-\erfc(a/\sqrt{2}))\\
    & =  \frac{1}{n+1} - \frac{1}{2}(\erfc(-a/\sqrt{2}))\\
\shortintertext{We now use that, for $x>0$, $\erfc(x) \leq \frac{2}{\sqrt{\pi}}\cdot \frac{e^{-x^2}}{x+\sqrt{x^2+\frac{4}{\pi}}}$\cite{WAUB}.} 
& \geq \frac{1}{n+1} - \frac{1}{2}\left(\frac{2}{\sqrt{\pi}} \frac{\exp(-a^2/2)}{-2a/\sqrt{2}}\right)\\
    & = \frac{1}{n+1} - \left(\frac{1}{\sqrt{2\pi}} \frac{\exp(-a^2/2)}{|a|}\right)\\
    & \geq \frac{1}{n+1} - \frac{\exp(-a^2/2)}{|a|}\\
    & \geq \frac{1}{n+1} - \exp(-a^2/2)\\
\shortintertext{We continue using $a^2 \geq 2\ln(n+1) + 2\ln(1/c)$.}
& \geq \frac{1}{n+1} - \exp(-\ln(n+1) - \ln(1/c))\\
    & = \frac{1}{n+1}\left(1 - c\right).
\end{align*}
Overall, we get
\begin{align*}
P(Y < X_n) &> \frac{1-c}{n+1}\left(1 - c\left(\sqrt{2\ln(n+1)} +  \sqrt{2\ln(1/c)} + \frac{c}{2}\right)\right)\\
& \geq \frac{1}{n+1}\left(1 - c\left(1 + \sqrt{2\ln(n+1)} +  \sqrt{2\ln(1/c)} + \frac{c}{2}\right)\right).
\qed
\end{align*}

\end{proof}
\qed

\begin{lem}\label{lem:movingWeight}
Let $\varepsilon \in (0,1)$ and define
$$
M_\varepsilon = \set{p \in [0.25,1]^n}{ \sum_{i=1}^n p_i \leq n(1-\varepsilon)}.
$$
Then
$$
\max_{p,q \in M_\varepsilon} \sum_{i=1}^n (2p_iq_i-p_i-q_i) \leq - n \varepsilon/2.
$$
\end{lem}

\begin{proof}
Let $p,q \in [0.25,1]$ and define $\overline{p} = \sum_{i=1}^n p_i$ and $\overline{q} = \sum_{i=1}^n q_i$. We have that
$$
\sum_{i=1}^n (2p_iq_i-p_i-q_i) = -\overline{p}-\overline{q}+2\sum_{i=1}^n p_iq_i.
$$
Keeping $\overline{p}$ and $\overline{q}$, this latter term is maximized by maximizing $\sum_{i=1}^n p_iq_i$. This is obtained by moving $p$-weight from summands with smaller co-factors to summands with larger co-factors, and similarly for $q$-weight. Thus, the term is maximized by supposing as many $p$- and $q$-weights at the upper border as possible, and the others at $0.25$.

For simplicity, we suppose that there are exactly $k$ at the lower border of $0.25$ and $n-k$ at the upper border for $p$, and $\ell$ at the lower border and $n-\ell$ at the upper border for $q$.
Therefore, $n-k + k/4 = \overline{p}$ and $n-\ell + \ell/4 = \overline{q}$, so we derive $k = 4(n-\overline{p})/3$ and $\ell = 4(n-\overline{q})/3$.

Without loss of generality, we suppose $k \geq \ell$; this gives $\overline{q} \geq \overline{p}$. Then we have
\begin{align*}
\sum_{i=1}^n p_iq_i & \leq (n- k) \cdot 1 + (k - \ell)/4 + \ell/16\\
    &= n- \frac{4(n-\overline{p})}{3}  + \frac{4(n-\overline{p})/3 - 4(n-\overline{q})/3}{4} + \frac{4(n-\overline{q})}{(3\cdot 16)}\\
    &= -n/3 + 4\overline{p}/3  + (\overline{q}-\overline{p})/3 + (n-\overline{q})/12\\
    &= -n/4 + \overline{p} + \overline{q}/4.
\end{align*}

Thus, we have
\begin{align*}
 -\overline{p}& -\overline{q}+2\sum_{i=1}^n p_iq_i \\
 &=   -\overline{p}-\overline{q}+2(-n/4 + \overline{p} + \overline{q}/4)\\
 &=   -n/2 + \overline{p} - \overline{q}/2\\
 &\leq   -n/2 + \overline{q} - \overline{q}/2\\
 &=   -(n-\overline{q})/2\\
 &\leq   -n\varepsilon/2.
\end{align*}
\end{proof}
\qed

\end{document}